\renewcommand{\cite}[1]{\citep{#1}} 
\numberwithin{table}{section}
\numberwithin{figure}{section}
\numberwithin{equation}{section}
\newcommand{\refline}[1]{Line~\ref{#1}}
\theoremstyle{definition}
\newtheorem{theorem}{Theorem}[section]
\newtheorem{example}[theorem]{Example}
\newtheorem{definition}[theorem]{Definition}
\newtheorem{proposition}[theorem]{Proposition}
\newtheorem{remark}[theorem]{Remark}
\newcommand{\fname}[1]{\mathit{#1}} 
\renewcommand{\qed}{\hfill$\square$} 
\newcommand{\nat}{\mathbb{N}}
\newcommand{\ints}{\mathbb{Z}}
\newcommand{\set}[1]{\{#1\}}
\newcommand{\abs}[1]{\fname{abs}\left(#1\right)}
\newcommand{\powset}[1]{\mathcal{P}(#1)}
\newcommand{\intrange}[2]{\set{#1\,..\,#2}}
\newcommand{\Aplus}{+1}
\newcommand{\Bplus}{+5}
\newcommand{\Bminus}{-10}
\newcommand{\actions}{A}
\newcommand{\act}{a}
\newcommand{\task}{T}
\newcommand{\tasktup}{(\states,\startstates,\actions,\features,\tr,\ff,\avs)}
\newcommand{\states}{S}
\newcommand{\startstates}{B}
\newcommand{\tr}{\delta} 
\newcommand{\ff}{\varphi} 
\newcommand{\f}{f} 
\newcommand{\st}{s}
\newcommand{\avs}{\Omega} 
\newcommand{\features}{F} 
\newcommand{\X}[2]{#1_{#2}}
\newcommand{\taskX}[1]{\X{\task}{#1}}
\newcommand{\actionsX}[1]{\X{\actions}{#1}}
\newcommand{\statesX}[1]{\X{\states}{#1}}
\newcommand{\startstatesX}[1]{\X{\startstates}{#1}}
\newcommand{\trX}[1]{\X{\tr}{#1}}
\newcommand{\ffX}[1]{\X{\ff}{#1}}
\newcommand{\avsX}[1]{\X{\avs}{#1}}
\newcommand{\featuresX}[1]{\X{\features}{#1}} 
\newcommand{\tasktupX}[1]{(\statesX{#1},\startstatesX{#1},\actionsX{#1},\allowbreak\featuresX{#1},\allowbreak\trX{#1},\ffX{#1},\avsX{#1})}
\newcommand{\view}{V} 
\newcommand{\pol}{\pi}
\newcommand{\prop}[2]{\fname{act}(#1,#2)}
\newcommand{\propX}[3]{\fname{act}_{#1}(#2,#3)} 
\newcommand{\aLearn}{A-learning}
\newcommand{\mem}{P} 
\newcommand{\memfix}{{P^*}} 
\newcommand{\jump}[1]{{}\xrightarrow{#1}}
\newcommand{\g}[1]{\text{.#1}} 
\newcommand{\grid}{\mathcal{G}}
\newcommand{\gridwidth}{\mathit{Width}}
\newcommand{\gridheight}{\mathit{Height}}
\newcommand{\gridtargets}{\mathit{Targets}}
\newcommand{\gridstarts}{\mathit{Starts}}
\newcommand{\gridtime}{\tau}
\newcommand{\taskof}[1]{\mathit{task}(#1)}
\newcommand{\mandist}{d_\text{1}} 
\newcommand{\mannorm}[1]{\left|#1\right|_1}
\newcommand{\gridmove}{\mathit{move}}
\begin{document}

\title{On Avoidance Learning with Partial Observability}

\author{
    Tom~J.~Ameloot\thanks{T.J.~Ameloot is a Postdoctoral Fellow of the Research Foundation -- Flanders (FWO).}
    \\    
    {\small Hasselt University and transnational University of Limburg}}
\date{}

\maketitle{}

\begin{abstract}
We study a framework where agents have to avoid aversive signals.
The agents are given only partial information, in the form of features that are projections of task states. 
Additionally, the agents have to cope with non-determinism, defined as unpredictability on the way that actions are executed.
The goal of each agent is to define its behavior based on feature-action pairs that reliably avoid aversive signals.
We study a learning algorithm, called \aLearn, that exhibits fixpoint convergence, where the belief of the allowed feature-action pairs eventually becomes fixed. 
\aLearn\ is parameter-free and easy to implement.
\end{abstract}

\setcounter{tocdepth}{2}
\tableofcontents

\section{Introduction}

The main aim of this paper is to let agents solve tasks by ultimately avoiding aversive signals forever. This approach entails an interesting and perhaps quite strong guarantee on the agent performance.
The motivation is partly to understand how animals are successful in solving problems, like navigation~\cite{geva-sagiv_2015}, with limited sensory information and unpredictable effects in the environment. The animal should find food or return home before it gets lost or becomes exhausted.

We study a general framework in which agents need to avoid problems in tasks. If the agent encounters a problem, an aversive signal is received. This way the agent could learn to avoid the problem, by avoiding the usage of actions and action-sequences that lead to aversive signals.
The general idea is sketched in Figure~\ref{fig:aversive}.
Before we discuss our approach, we first briefly discuss two important ingredients of the framework, namely, partial information and non-determinism.

\begin{figure}
    \begin{center}
    \includegraphics[width=0.4\textwidth]{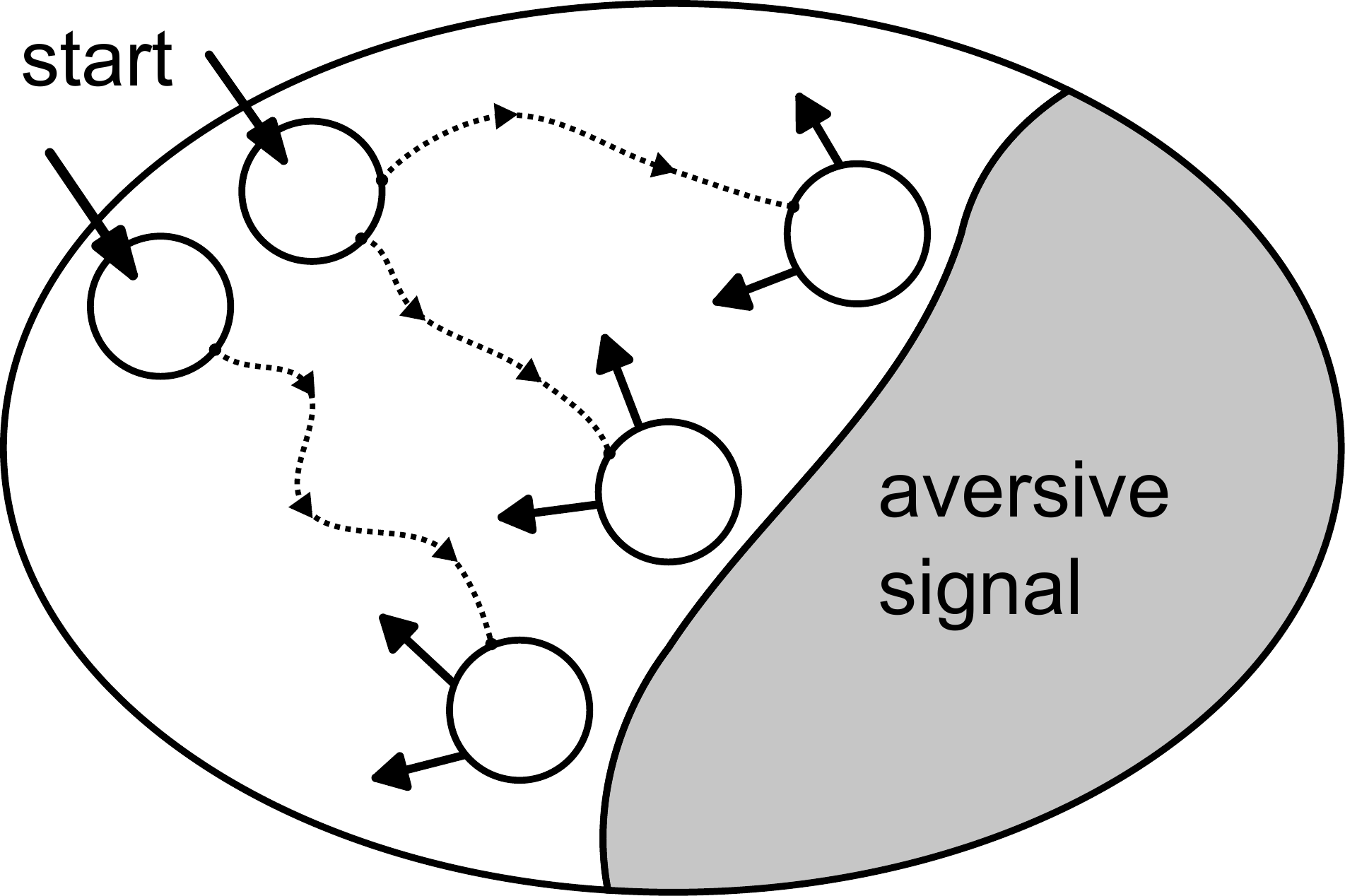}
    \end{center}
    \caption{Tasks contain aversive signals that should be avoided. In general, we allow multiple start states.}
    \label{fig:aversive}
\end{figure}

\paragraph*{Partial information and non-determinism}
First, we assume that the agent is given only partial information, as follows: each encountered task state is projected to a set of features. 
This is a propositional representation, where each feature is a true/false question posed about the state~\cite{vanotterlo_2009}.
The number of features determines the granularity by which states can be perceived by agents.    
Therefore, the behavior of the agent will be based on feature-action associations, and not on (direct) state-action associations.%
    \footnote{If the same feature is used by several states, this may be seen as a form of generalization over those states. In this paper, features are used directly, and we do not perform a (second) generalization step over the observed features.}
Each application can choose its own features and its own way of computing them.
Examples of features are: detected edges in images, impulses through sensors, temporal events over streams, AND-OR combinations thereof, etc.
In this paper, we assume that tasks have only a finite number of features, although there could still be many features.
Perhaps not surprisingly, theoretical investigations show how hard it is to solve tasks under partial information, see e.g.~\cite{lusena_2001,roy_2005,chatterjee_2015}.%
    \footnote{Part of the motivation for this paper is also to reason about the feature design for solving tasks. An example is given in Section~\ref{sec:grid}.}

Second, we allow tasks to be non-deterministic. This means that the effect of some action-applications to states can not be predicted. In this paper, we assume that non-determinism is an inherent property of tasks. Although partial information also limits the reasoning within the agent, and therefore generally prevents accurate predictions, it remains a separate assumption to allow tasks themselves to be non-deterministic. For example, one may consider tasks in which features actually provide complete information, and where the agent could still struggle with non-determinism.%
    \footnote{We will see an example of this situation a bit later in the Introduction.}

\paragraph*{Strategies}

The focus of this paper to understand agents based on their behavior in tasks, which could be a useful way to understand intelligence in general~\cite{pfeifer-scheier_1999}.
As remarked earlier, in this paper, agent behavior will be based on feature-action associations. Conceptually, we may think of the agent as having a set $\mem$ of possibly allowed feature-action pairs, and whenever the agent encounters a task state $\st$, the agent (thinks it) is allowed to perform all actions $\act$ for which there is a feature $\f$ observed in state $\st$ such that $(\f,\act)\in\mem$.
We also refer to $\mem$ as a policy.

We say that a set $\mem$ of feature-action pairs constitutes a \emph{strategy} for a start state if $\mem$ will never lead to an aversive signal when starting from that start state. 
We note that it is not always sufficient for the states near the aversive signals to steer away from them, because sometimes the agent may get trapped in a zone of the state space that does not immediately give aversive signals but from which it is impossible to reliably escape the aversive signals. The agent should avoid such zones, which could require that the agent anticipates aversive signals from very early on.


Our aim in this paper is to reason about the existence of such successful strategies for classes of tasks, and to discuss an algorithm to find such strategies automatically. A main challenge throughout this study is posed by the compression of state information into features and the uncontrollable outcomes due to non-determinism.

\paragraph*{Reward-based value estimation seems unsuitable}
Before presenting more details of our algorithm, we first argue that algorithms based on (numerical) reward-based value estimation do not always appear suitable for reliably finding problem-avoiding strategies.

On the theory side, convergence proofs of value estimation algorithms often require the learning step-size to decrease over time, see e.g.~\cite{jaakkola_1994, watkins-dayan_1992}. Intuitively, convergence of the estimated values arises because the decreasing learning step-size makes it harder and harder for the agent to learn as time progresses. However, we would like to avoid putting such limits on the agent, because: 
    (1) it is useful to also study more flexible agents because they might sometimes better describe real-world agents; 
    (2) in practice it might be difficult to estimate in what exact way the learning step-size should decrease; and, 
    (3) also in practice, there are no guarantees on what the estimated values will eventually be after a certain amount of time has passed, because the estimates depend strongly on random fluctuations during task performance (due to non-determinism).

In practice, a non-decreasing step-size, although potentially useful to model flexible agents that keep learning from their latest experiences~\cite{sutton-barto_1998}, can lead to problems of its own.
We illustrate this with the example task shown in Figure~\ref{fig:reward-task}. There is a start state $1$, and two actions $a$ and $b$ that lead back to state $1$. We assume complete information for now, i.e., state $1$ is presented completely to the agent as a feature with the same information, namely, the identity of state $1$.
Suppose that the state-action pair $(1,a)$ always gives reward $\Aplus$. But for the pair $(1,b)$, the reward could be either $\Bplus$ or $\Bminus$. Although the pair $(1,b)$ is clearly preferable over the pair $(1,a)$ in case of positive reward, there is the risk of incurring a strong negative reward. The negative reward represents an aversive signal. 
In the perspective of strategies from above, note that $(1,a)$ constitutes a strategy: constantly executing action $a$ in state $1$ leads to an avoidance of aversive signals forever.

\begin{figure}
\begin{center}
\begin{subfigure}[t]{.45\textwidth}   
    \begin{center}
    \includegraphics[height=2cm]{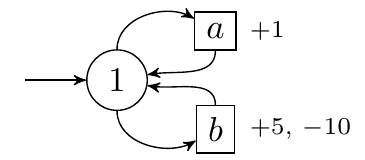}
    \end{center}
    \subcaption{Example task, with state $1$, and actions $a$ and $b$. Feedback to the agent is modeled as numerical reward.}
    \label{fig:reward-task}
\end{subfigure}
\quad
\begin{subfigure}[t]{.45\textwidth}
    \begin{center}
    \includegraphics[height=2cm]{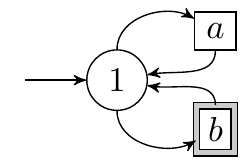}    
    \end{center}
    \subcaption{The same task as in Figure~\ref{fig:reward-task}, but now feedback is modeled as (boolean) aversive signals. Applying action $b$ results in an aversive signal, indicated by a box with double border.
    }
    \label{fig:aversive-task}
\end{subfigure}
\caption{An example task, represented in two ways. States and action applications are represented by circles and boxes respectively. Start states are indicated by an arrow without origin.}
\label{fig:intro-task}
\end{center}
\end{figure}

But the agent will not necessarily learn to avoid $(1,b)$ when a hidden task mechanism could periodically deceive the agent by issuing higher rewards under action $b$.
Concretely, let $n$ be a strictly positive natural number. To represent the outcome of action $b$, suppose that we constantly give reward $\Bplus$ during the first $n$ times $(1,b)$ is applied; the next $n$ times we give $\Bminus$; the following $n$ times we again give $\Bplus$, and so on. 
We call this the $n$-swap semantics.
For each outcome, the empirical probability would be $0.5$: indeed, the observed frequency of each outcome converges to $0.5$ as we perform more applications of action $b$. We can choose $n$ arbitrarily large; this does not change the empirical probability of each outcome.

Without the restriction on learning step-size, it seems that value estimation algorithms can get into trouble on the above setting because we can set $n$ so large that after a while the agent starts to believe that the outcome would remain fixed. 
For example, we could start with reward $\Bplus$ for the pair $(1,b)$ during the first $n$ applications, and the agent starts believing that the reward really is $\Bplus$. Then come the next $n$ applications, where we repeatedly give reward $\Bminus$, and the agent starts believing that the reward really is $\Bminus$. We can swap the two outcomes forever, each for a period of $n$ applications, and the agent will never make up its mind about the behavior of action $b$ in state $1$.
This effect is illustrated in Figure~\ref{fig:reward-pattern}.

\begin{figure}
\begin{center}
\begin{subfigure}[t]{.45\textwidth}
    \includegraphics[width=1\textwidth]{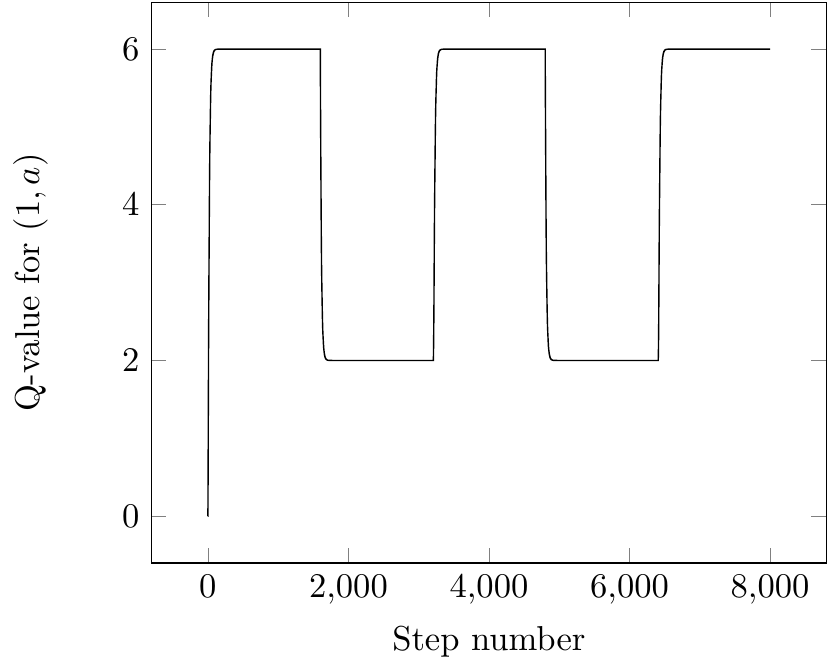}
    \subcaption{Q-values for pair $(1,a)$.}
\end{subfigure}
\quad
\begin{subfigure}[t]{.45\textwidth}
    \includegraphics[width=1\textwidth]{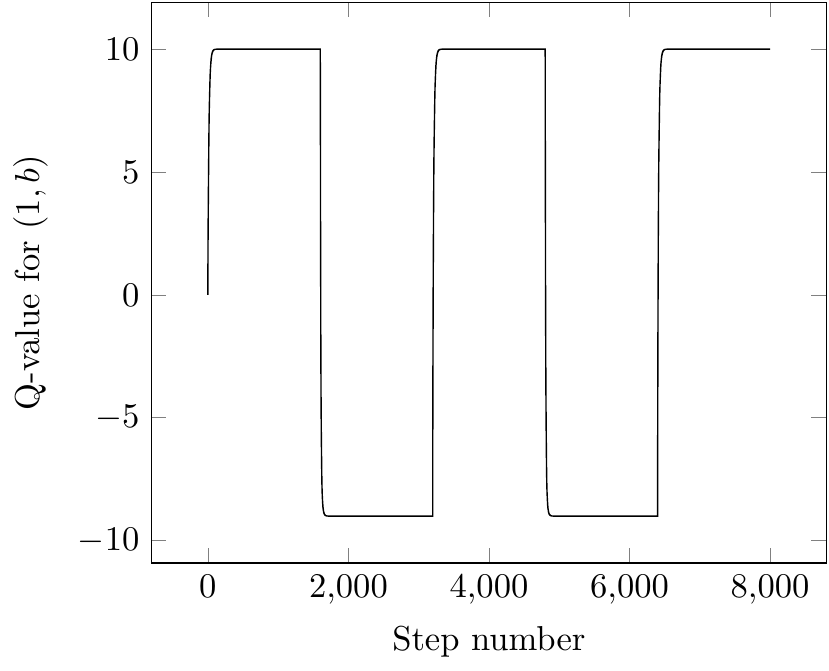}
    \subcaption{Q-values for pair $(1,b)$.}
\end{subfigure}
\end{center}
\caption{We have simulated the Q-learning algorithm~\cite{watkins_1989,watkins-dayan_1992} on the example task shown in Figure~\ref{fig:reward-task}, by alternating actions $a$ and $b$, but with constant step-size $\alpha=0.25$ (and discounting factor $\gamma=0.5$). 
The resulting value estimates for the state-action pairs $(1,a)$ and $(1,b)$ are plotted against time.
The outcome of $(1,b)$ was either $\Bplus$ or $\Bminus$, as determined by the $n$-swap semantics where $n=800$; this semantics is relative to the applications of $b$, and not relative to the global time steps.}
\label{fig:reward-pattern}
\end{figure}

Although the above example is very simple, real-world tasks could still exhibit problems similar to the $n$-swap semantics. Even if such problems are identified and understood, perhaps there are no good solutions for them as the problems might be outside the range of control for the agent.
In this paper we would like to learn to avoid the aversive signals forever, even under quite adversary semantics of tasks like the $n$-swap semantics.

\paragraph*{Avoidance learning}

In the example of Figure~\ref{fig:reward-task}, we would like the agent to make up its mind more quickly that action $b$ leads to aversive signals. 
An idea is to let the agent (monotonically) increase its estimate of the value of a feature-action pair.
We should immediately observe, however, that this idea will not work when feedback remains to be modeled as reward, as in the example: once the outcome of $(1,b)$ is observed to be $\Bplus$; then remembering $\Bplus$ would lead to a preference of $(1,b)$ over $(1,a)$, causing a reward-seeking agent to  (accidentally) encounter negative rewards, i.e., aversive signals, indefinitely under $n$-swap semantics.

Fortunately, the idea of increasing estimates seems to work when feedback is modeled with aversive signals, even in face of non-determinism. %
Indeed, \citet{heger_1994} has previously proposed a learning algorithm in tasks where actions have numeric costs, representing aversive signals. By repeatedly remembering the highest observed cost for a state-action pair (with the $\max$-operator), and by choosing actions to minimize such costs, the agent learns to steer away from high costs. 
We would like to further elaborate this idea and how it relates to the notion of aversion-avoiding strategies mentioned above.

In our framework, we only explicitly model aversive signals, as boolean flags: the flag ``true'' would mean that an aversive signal is present. This leads to a framework that is conceptually neat and computationally efficient. 
Because a policy is either successful in avoiding aversive signals forever, or it is not, the choice of a boolean model aligns well with our motivation to study the relationship between learning and successful strategies.
To illustrate, the example of Figure~\ref{fig:reward-task} would be represented by Figure~\ref{fig:aversive-task}, where only the aversive signal is explicitly represented.
In general, the boolean flags will act like borders, to demarcate undesirable areas in the state space. Reward is now only implicit: by using a strategy, as mentioned earlier, the agent can stay away from the aversive signals forever.

In the above setting with explicit aversive signals, we describe an avoidance learning algorithm, called \aLearn, in which the agent repeatedly flags feature-action pairs that lead to aversive signals, or, as an effect thereof, to states for which all proposed actions are flagged (based on the observed features).
Intuitively, the flags indicate ``danger''.
On the example of Figure~\ref{fig:aversive-task}, \aLearn\ flags $(1,b)$ at the first occurrence of an aversive signal under action $b$; and, importantly, the strategy $(1,a)$ is never flagged.%
    \footnote{In our description of \aLearn\ (Section~\ref{sec:alg}), the flagged feature-action pairs are removed from the agent's memory.}
There is no second chance for changing the agents mind. This gives one of the strongest convergence notions in learning, namely, fixpoint convergence, where the agent eventually stops changing its mind about the outcome of actions.  

If there really is a strategy, avoidance learning will carve out a subset of good feature-action pairs from the mass of all feature-action pairs. This way, it seems that avoidance learning could be useful in making the agent eventually avoid aversive signals forever.
This provides the guaranteed agent performance we would like to better understand, as remarked at the beginning of the Introduction.

\paragraph*{Meaning of optimality}

In this paper we view an agent as being optimal if it can (learn to) avoid aversive signals forever. There is no explicit concept of reward.
Depending on the setting, or application, aversive signals can originate from diverse sources and together they can describe a very detailed image of what the agent is allowed to do, and what the agent is not allowed to do. One obtains a rich conceptual setting for reasoning about agent performance.

For example, suppose a robotic agent should learn to move boxes in a storehouse as fast as possible. We could emit an aversive signal when the robot goes beyond a (reasonable) time limit. Any other constraints, perhaps regarding battery usage, can be combined with the first constraint by adding more signals.

\paragraph*{Outline}
This paper is organized as follows.
    We discuss related work in Section~\ref{sec:relwork}.
    We introduce fundamental concepts like tasks, and strategies, in Section~\ref{sec:fund}.
    We present and analyze our avoidance learning algorithm in Section~\ref{sec:alg}.
    One of our results is that if there is a strategy for a start state then the algorithm will preserve the strategy. This mechanism can be used to materialize strategies if they exist.
    To better understand the nature of strategies, we prove the existence of strategies for a family of grid navigation tasks in Section~\ref{sec:grid}.

\section{Related Work}\label{sec:relwork}

The idea of avoiding aversive signals, or problems in general, is related to safe reinforcement learning~\cite{garcia_2015}. There, the goal is essentially to perform reinforcement learning, often based on approximation techniques for optimizing numerical reward, with the addition of avoiding certain problematic areas in the task state space. An example could be to train a robot for navigation tasks but while avoiding damage to the robot as much as possible.
In the current paper, feedback to the agent consists of the aversive signals. Reward becomes more implicit, as it lies in the avoidance of aversive signals.
Therefore, the viewpoint in this paper is that the agent is called optimal when it eventually succeeds in avoiding all aversive signals forever; there is no notion of optimizing reward.
The approach is related to a trend identified by~\citet{garcia_2015}, namely, the modification of the optimality criterion.%

The work by \citet{heger_1994} is closely related to our work.
The framework by \citet{heger_1994} provides feedback to the agent in the form of numerical cost signals, which, from the perspective of this paper, could be seen as aversive signals.
Similar to our $n$-swapping example in the Introduction (Figure~\ref{fig:reward-task}), \citet{heger_1994} provides other examples to motivate that estimation of expected values is not suitable for reliably deciding actions.
The learning algorithm proposed by \citet{heger_1994} maps each state-action pair to the worst outcome (or cost), by means of the $\max$-operator. By remembering the highest incurred cost for a state-action pair, the agent in some sense learns about ``walls'' in the state space that constrain its actions towards lower costs.    
The avoidance learning algorithm discussed in this paper (Section~\ref{sec:alg}) is similar in spirit to the one by \citet{heger_1994}. 
A deviation, however, is that we assume here a boolean interpretation of aversive signals, which leads to a neat and computationally efficient framework. 
We additionally identify the concept of strategies, under which the agent can avoid aversive signals forever. 
Our interest lies in understanding such avoidance strategies and their relationship to the avoidance learning algorithm. Moreover, we also focus on partial information, by letting the agent only observe features instead of full states.

\section{Fundamental Notions}\label{sec:fund}

\subsection{Tasks}
\label{sub:task}

For a set $X$, let $\powset X$ denote the powerset of $X$, i.e., the set of all subsets of $X$.
A task is a tuple $\task=\tasktup$ where
\begin{itemize}
    \item $\states$ is a nonempty set of states;    
    \item $\startstates\subseteq\states$ is a finite subset of start states;\footnote{The symbol $\startstates$ stands for ``begin''.}
    \item $\actions$ is a nonempty finite set of actions;
    \item $\features$ is a nonempty finite set of features;
    \item $\tr:\states\times\actions\to\powset\states$ is the transition function;
    \item $\ff:\states\to\powset\features$ is the feature function; and
    \item $\avs\subseteq\states\times\actions$ is the set of aversive signals,
\end{itemize}
where all states $\st\in\states$ are reachable in the sense that there is a sequence $\st_0,\act_0,\st_1,\act_1,\ldots,\st_n$ with $\st_0\in\startstates$, $\st_n=\st$, and $\st_{i}\in\tr(\st_{i-1},\act_{i-1})$ for each $i\in\intrange 1n$.

The function $\tr$ maps each pair $(\st,\act)\in\states\times\actions$ to a set of possible successor states, representing non-determinism.
The function $\ff$ associates a set of features to each state; an agent interacting with the task can only observe states through features and can therefore not directly observe states.
The meaning of a pair $(\st,\act)\in\avs$ is that the agent could witness an aversive signal when performing action $\act$ in state $\st$.%
    \footnote{In a fair task, if the agent would infinitely often perform action $\act$ in state $\st$, then the agent witnesses an aversive signal infinitely often during the application of $\act$ at state $\st$, but this signal could sometimes be omitted. See also Section~\ref{sub:fairness}.}
    
\begin{example}\label{ex:task-two-states}
     We define an example task $\task=\tasktup$ as follows:
        $\states=\set{1,2}$;
        $\startstates=\states$;
        $\actions=\set{a}$;
        $\features=\set{f,g}$;
        regarding $\tr$, we define
        \begin{align*}
            &\tr(1,a) = \set{1},\\
            &\tr(2,a) = \set{2};
        \end{align*}
        regarding $\ff$, we define
        \begin{align*}
            &\ff(1) = \set{f},\\
            &\ff(2) = \set{g};
        \end{align*}
        and, we define $\avs=\set{(1,a)}$.
    The task is depicted in Figure~\ref{fig:task-two-states}.
    \qed
\end{example}

\begin{figure}
    \begin{center}
    \includegraphics[width=0.3\textwidth]{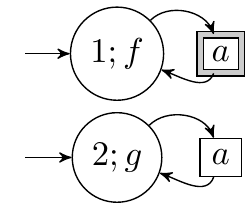}
    \end{center}
    \caption{The task from Example~\ref{ex:task-two-states}. The basic graphical notation is explained in Figure~\ref{fig:intro-task}. Inside the circles, we write the state identifier followed by a semicolon and the features of the state.}
    \label{fig:task-two-states}
\end{figure}

\begin{remark}[All features, one successor state]
    Note that the function $\ff$ maps each state to a set of features. Similarly, the function $\tr$ maps each state-action pair to a set of successor states.
    However, an agent interacts with each function in a different way, as follows. 
    For a state $\st$, we assume that an agent can always observe all features in $\ff(\st)$ simultaneously. 
    This way, the function $\ff$ may be viewed as being deterministic.   
    In contrast, for a state-action pair $(\st,\act)$, we select only one successor state from $\tr(\st,\act)$ to proceed with the task.
    
    The function $\ff$ remains deterministic throughout this paper. The framework still allows us to consider tasks in which the agent can sometimes observe a certain feature and sometimes it can not. Thereto we can define richer states, in which, say, the status of sensors is stored; if a state $\st$ says that a sensor is broken, then $\ff(\st)$ could omit the feature that would otherwise be generated by the sensor.
    \qed
\end{remark}

\begin{remark}[Modeling flexibility]
    Our definition of task resembles that of a standard Markov decision process~\cite{sutton-barto_1998}, but we have added features and aversive signals.
    There can be many features, actions, and start states. And we allow an infinite number of states.  
    \qed
\end{remark}

\subsection{Strategies}
\label{sub:strategy}

Since the agent may only see features, and not states directly, agent behavior has to be based on feature-action associations. 

Let $\task=\tasktup$ be a task.
A \emph{policy} for $\task$ is a total function $\pol:\features\to\powset{\actions}$. We allow features to be mapped to empty sets of actions.
If the task is understood from the context, for a state $\st\in\states$ we define
\[
    \prop\st\pol = \bigcup_{\mathlarger{\f\in\ff(\st)}}\pol(\f),
\]
i.e., $\prop\st\pol$ is the set of all actions that are proposed by the policy $\pol$ based on the features in $\st$.
We say that a state $\st$ is \emph{blocked in $\pol$} if $\prop\st\pol=\emptyset$, i.e., the policy does not propose actions for $\st$.

\begin{remark}[Features as actors]
    For a state $\st$, we do not view $\ff(\st)$ as an atomic signature to which actions should be associated. Instead, the definition of $\prop\st\pol$ indicates that each feature in $\ff(\st)$ may independently propose its own actions, regardless of what is proposed by other features. All proposed actions are collected into a set, by means of the union-operator.%
        \footnote{It would in general not be possible to replace this with an intersection-operator, because then there might be some unreliable features to which no action can be associated. Under an intersection-operator, such empty sets will destroy the proposals contributed by reliable features.}
    Therefore, features are little actors that become active at appropriate times and that suggest to the agent what actions are (supposedly) allowed.
    This viewpoint resembles the way that an individual neuron (or a small group of neurons) in the brain could represent a distinct concept and could be individually linked to actions~\cite{potjans_2011,fremaux_2013}.%
        \footnote{So, features constitute the agent's mind, like neurons constitute an organism's brain. Likely there are only finitely many neurons, so the assumption of having a finite number of features and a finite number of actions (or decisions) appears natural.}
    It is the goal of the learning algorithm (Section~\ref{sec:alg}) to remove feature-action associations that lead to aversive signals or, as a result of such removals, to blocked states.
    \qed
\end{remark}

We now consider the following definition:
\begin{definition}\label{def:strategy}
    
    A policy $\pol$ is called a \emph{strategy} for a start state $\st_0\in\startstates$ if
    \begin{enumerate}        

        \item \label{enu:strategy-start} $\prop{\st_0}\pol\neq\emptyset$;
        
        \item \label{enu:strategy-followup} $\forall\st\in\states$, $\forall\act\in\prop\st\pol$,
        \begin{enumerate}
            \item \label{enu:strategy-successor} $\forall\st'\in\tr(\st,\act)$ we have $\prop{\st'}\pol\neq\emptyset$; and,
            
            \item \label{enu:strategy-avs} $(\st,\act)\notin\avs$.
        \end{enumerate}
    \end{enumerate}
\qed
\end{definition}
In words: a policy is a strategy for a start state $\st_0$ if the policy acts upon $\st_0$; and, for any states upon which the policy acts, the reached successor states can also be acted upon, and the policy never causes aversive signals.
Intuitively, to use a strategy, for each encountered state $\st$ we first select some (arbritary) feature $\f\in\ff(\st)$ that satisfies $\pol(\f)\neq\emptyset$, and we subsequently select an arbitrary action $\act\in\pol(\f)$.

\begin{remark}[Global viewpoint]
    The definition of strategy demands properties in a global fashion, possibly also for states that would not be explored when strictly following the strategy. 
    This condition however ensures that learning algorithms can never have negative experiences when they perform actions suggested by the strategy; see Section~\ref{sec:alg}.
    
    Suppose $\pol$ is a strategy, and let $\f$ be a feature with $\pol(\f)\neq\emptyset$. Intuitively, the definition of strategy says that $\f$ is a reliable feature, in the sense that every time we see it, we may safely perform all actions in $\pol(\f)$, without the risk of encountering blocked states and aversive signals. This is related to the Markov assumption~\cite{sutton-barto_1998}, because we do not have to remember any features that were seen during previous time steps, and we may instead choose actions based on just $\f$ by itself.
    \qed
\end{remark}

\begin{example}\label{ex:partial-strategy}
    Consider the task from Example~\ref{ex:task-two-states}.
    There is no strategy for start state $1$, but there is a strategy $\pol$ for start state $2$ defined as: $\pol(f)=\emptyset$ and $\pol(g)=\set{a}$.
    \qed
\end{example}

The following property illustrates that strategies are resilient to adding new features. In practical applications, this means that the addition of new kinds of features will not destroy previously existing strategies.%
    \footnote{For example, in a robotics application, new features could be the result of adding new sensor types to the robot.}
\begin{proposition}\label{result:features}
    Let $\taskX 1 = \tasktupX 1$ be a task.
    Let $\view$ be a set of features that is disjoint from $\featuresX 1$.
    Let $\taskX 2 = \tasktupX 2$ be another task that is almost the same as $\taskX 1$ except that $\featuresX 2=\featuresX 1\cup\view$ and for each state $\st$ the constraint $\ffX 2(\st)\cap\featuresX 1=\ffX 1(\st)$ holds.%
        \footnote{This means that $\taskX 2$ uses the features of $\featuresX 1$ in the same way as $\taskX 1$.}
    Let $\st_0$ be a start state, and suppose that a policy $\pol:\featuresX 1\to\powset{\actionsX 1}$ is a strategy for $\st_0$ in $\taskX 1$. Then $\pol$ is also a strategy for $\st_0$ in $\taskX 2$.
\end{proposition}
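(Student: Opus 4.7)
The plan is essentially to observe that the newly added features do not alter which actions are proposed in any state, so every condition of Definition~\ref{def:strategy} transfers verbatim from $\taskX 1$ to $\taskX 2$.

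First, I would clarify the (slight) notational abuse in the statement: a policy for $\taskX 2$ must have domain $\featuresX 2$, so I would extend $\pol$ to $\featuresX 2$ by setting $\pol(\f)=\emptyset$ for every $\f\in\view$. This is the only natural extension, since $\pol$ was originally defined only on $\featuresX 1$ and we want the old behaviour to be preserved.

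The crucial computation is then that the proposed actions are the same in both tasks. For any state $\st\in\statesX 1=\statesX 2$, the definition gives
\[
    \propX 2{\st}{\pol} \;=\; \bigcup_{\f\in\ffX 2(\st)}\pol(\f)
    \;=\; \bigcup_{\f\in\ffX 2(\st)\cap\featuresX 1}\pol(\f) \;\cup\; \bigcup_{\f\in\ffX 2(\st)\cap\view}\pol(\f).
\]
The first union equals $\bigcup_{\f\in\ffX 1(\st)}\pol(\f)=\propX 1{\st}{\pol}$ by the hypothesis $\ffX 2(\st)\cap\featuresX 1=\ffX 1(\st)$, and the second union is empty by our extension choice on $\view$. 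Since $\featuresX 1$ and $\view$ are disjoint and together cover $\featuresX 2$, no features are double-counted or missed. Hence $\propX 2{\st}{\pol}=\propX 1{\st}{\pol}$ for every state $\st$.

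With this identity in hand, checking the three clauses of Definition~\ref{def:strategy} for $\pol$ in $\taskX 2$ reduces to the corresponding clauses for $\pol$ in $\taskX 1$, which hold by assumption: clause~\ref{enu:strategy-start} uses $\propX 2{\st_0}{\pol}=\propX 1{\st_0}{\pol}\neq\emptyset$; clause~\ref{enu:strategy-successor} uses the same identity applied at successor states together with $\trX 2=\trX 1$ (which is part of ``almost the same''); and clause~\ref{enu:strategy-avs} uses $\avsX 2=\avsX 1$. There is no real obstacle here—the argument is just unfolding definitions—so the only point that needs care is the initial remark that $\pol$ must be interpreted as a policy for $\taskX 2$ via the empty extension on $\view$, and the disjointness assumption is exactly what makes that extension well defined and harmless.
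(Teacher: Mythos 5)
Your proposal is correct and follows essentially the same route as the paper: extend $\pol$ by $\pol(\f)=\emptyset$ on $\view$ and exploit the constraint $\ffX 2(\st)\cap\featuresX 1=\ffX 1(\st)$ to transfer the conditions of Definition~\ref{def:strategy}. The only (harmless) difference is organizational: you establish the single identity $\propX 2\st\pol=\propX 1\st\pol$ up front, whereas the paper proves the two inclusions separately at the points where each is needed (the forward inclusion for Conditions~\ref{enu:strategy-start} and~\ref{enu:strategy-successor}, the reverse one to get $\act\in\propX 1\st\pol$ in Condition~\ref{enu:strategy-followup}).
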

\begin{proof}
    We show that the conditions of strategy in Definition~\ref{def:strategy} are satisfied for $\pol$ in $\taskX 2$.
    To better show which task is involved, for a state $\st$ and a task index $i\in\set{1,2}$, we write 
    \[
        \propX i\st\pol=\bigcup_{\f\in\ffX i(\st)}\pol(\f).
    \]
    When $\pol$ is used in $\taskX 2$, we assume $\pol(\f)=\emptyset$ for each $\f\in\view$. Above we have also assumed that $\actionsX 1=\actionsX 2$.

    \paragraph*{Condition~\ref{enu:strategy-start}}
    Since $\pol$ is a strategy for $\st_0$ in $\taskX 1$, we have $\propX 1{\st_0}\pol\neq\emptyset$. 
    This implies there is some $\f\in\ffX 1(\st_0)$ with $\pol(\f)\neq\emptyset$.
    Since $\ffX 1(\st_0)\subseteq\ffX 2(\st_0)$ by assumption, we obtain $\propX 2{\st_0}\pol\neq\emptyset$.

    \paragraph*{Condition~\ref{enu:strategy-followup}}
    Let $\st$ be a state and assume there is some action $\act\in\propX 2\st\pol$.
    First we argue that $\act\in\propX 1\st\pol$.
    There must be a feature $\f\in\ffX 2(\st)$ with $\act\in\pol(\f)$. But since $\pol$ only knows features in $\featuresX 1$, we have $\f\in\ffX 2(\st)\cap\featuresX 1=\ffX 1(\st)$. Hence, $\act\in\propX 1\st\pol$.
    
    We first handle Condition~\ref{enu:strategy-successor}.
    Let $\st'\in\tr(\st,\act)$.
    Because $\act\in\propX 1\st\pol$ and $\pol$ satisfies Condition~\ref{enu:strategy-successor} in $\taskX 1$, we know $\propX 1{\st'}\pol\neq\emptyset$. So there is a feature $\f'\in\ffX 1(\st')$ with $\pol(\f')\neq\emptyset$. Since $\ffX 1(\st')\subseteq\ffX 2(\st')$, we know $\act\in\propX 2{\st'}\pol\neq\emptyset$, as desired.
    
    Now we handle Condition~\ref{enu:strategy-avs}.
    Because $\act\in\propX 1\st\pol$ and $\pol$ satisfies Condition~\ref{enu:strategy-avs} in $\taskX 1$, we know $(\st,\act)\notin\avsX 1$. Since $\avsX 2=\avsX 1$, we obtain $(\st,\act)\notin\avsX 2$, as desired.
\qedhere
\end{proof}

\section{Avoidance Learning}\label{sec:alg}

We present and study an avoidance learning algorithm, and its relationship to the concept of strategy introduced in Section~\ref{sub:strategy}.

\subsection{\aLearn\ Algorithm}

Algorithm~\ref{alg:global} is an avoidance learning algorithm. The algorithm describes how the agent interacts with the task, and how feature-action combinations are forgotten as the direct or indirect result of aversive signals. Some aspects of the interaction are not under control of the agent, in particular how a successor state is chosen by means of function $\tr$, and how features are derived from states by means of function $\ff$. We now provide more discussion of the algorithm. Henceforth, we will refer to Algorithm~\ref{alg:global} as \aLearn.

\begin{algorithm}[h]
\BlankLine{}
\KwData{Task $\task=\tasktup$.}

\BlankLine{}
$\mem := \features\times\actions$\;
\label{line:init-mem}

$\st$ := choose from $\startstates$\;
\label{line:init-state}

\BlankLine{}
\If{$\prop\st\mem=\emptyset$}{
    \label{line:start-fail}
    request restart (see below)\;    
}

\BlankLine{}
\While{true}{
\label{line:loop}
    
    \BlankLine{}
    \If{\textnormal{restart requested}}{
        \label{line:desired-restart}
        go to \refline{line:init-state}\;
    }
    
    \BlankLine{}
    $\act$ := choose from $\prop\st\mem$\;
    \label{line:action}

    \BlankLine{}
    $\st'$ := choose from $\tr(\st,\act)$\;    
    \label{line:succ-state}
    
    \BlankLine{}    
    \If{$(\st,\act)\in\avs$ \textnormal{or} $\prop{\st'}\mem=\emptyset$}{
        \label{line:feedback}
       $\mem := \mem \setminus (\ff(\st)\times\set{\act})$\;
       \label{line:exclude}
       request restart\;
       \label{line:fail}
    }    
    
    \BlankLine{}   
    $\st$ := $\st'$\;   
    \label{line:continue}    
}
\caption{Avoidance learning (\aLearn)}
\label{alg:global}
\end{algorithm}

The essential product of \aLearn\ is a set $\mem\subseteq\features\times\actions$ that represents the allowed feature-action pairs; the symbol $\mem$ stands for ``possibilities''.
At any time, the set $\mem$ uniquely defines a policy $\pol$ as follows: for each $\f\in\features$, we define $\pol(\f)=\set{\act\in\actions\mid(\f,\act)\in\mem}$.
Regarding notation, for any state $\st$, we write $\prop\st\mem$ to denote the set $\prop\st\pol$ of proposed actions, where $\pol$ is the unique policy defined by $\mem$.

We now explain the steps of \aLearn\ in more detail.
\begin{itemize}
    \item 
    \refline{line:init-mem} initializes $\mem$ with all feature-action pairs. We will gradually remove pairs if they lead to $\avs$ or to blocked states (that are created by removals of the first kind).
    
    \item
    \refline{line:init-state} selects a random start state. The control flow of the algorithm is redirected here each time we want to restart the task. But we never re-initialize $\mem$.
    
    Task restarts may be requested by \aLearn\ itself (see below), or externally by the training framework in which \aLearn\ is running.
    
    \item 
    \refline{line:start-fail} requests a task restart in case the chosen start state is blocked.
    This allows more exploration from the other start states. As we will see later in Theorem~\ref{theo:learn}(\ref{enu:theo-learn-preserve}), if no actions remain for a start state then this start state has no strategy.
    
    \item At \refline{line:loop}, the algorithm enters a learning loop. The loop is only exited to satisfy task restart requests, at \refline{line:desired-restart}.

    \item At \refline{line:action}, we choose an action $\act$ to apply to current state $\st$ based on the set $\prop\st\mem$ of still allowed actions.
    At \refline{line:succ-state}, we are subsequently given a successor state $\st'$, chosen arbitrarily from $\tr(\st,\act)$.
    
    \item Next, at \refline{line:feedback}, we check whether we have encountered $\avs$ or if successor state $\st'$ is blocked. In either case we exclude from $\mem$ the feature-action pairs that caused us to apply action $\act$ in state $\st$ (\refline{line:exclude}), and we restart the task (\refline{line:fail}).

    \item If we do not encounter $\avs$ and state $\st'$ is not blocked, then we proceed with the while loop (\refline{line:continue}).
\end{itemize}

Note that in general there are multiple runs of \aLearn\ on a task, because of the choice on action selection and the choice on successor state.
Each run of \aLearn\ is infinitely long. Nonetheless, there is always an eventual fixpoint on the set $\mem$ because after the initialization we only remove feature-action pairs. There are only a finite number of possible feature-action pairs, although there could be many.    
When the run is clear from the context, we write $\memfix$ to denote the fixpoint of $\mem$ obtained in that run.

For conceptual convenience, we can divide each run of \aLearn\ into trials by using the task restarts as dividers: whenever we execute \refline{line:init-state}, the previous trial ends and the next trial begins. Each trial is thus a sequence $\st_0,\act_0,\st_1,\act_1,\ldots,\st_n$, where $\st_0$ is a start state, $\st_n$ is the last state of the trial, and $\st_i\in\tr(\st_{i-1},\act_{i-1})$ for each $i\in\intrange 1n$.%
    \footnote{We only use explicit task restarts (\refline{line:init-state}) to divide runs into trials, and not the encounter of start states. This means that in principle we allow $\st_i\in\startstates$ for some or all $i\in\intrange 1n$.}

\begin{remark}[No stopping condition]
There is no stopping condition in the algorithm because in general we may not be able to detect when the agent has explored the task sufficiently to be successful at avoiding aversive signals. \qed
\end{remark}

\begin{remark}[Greediness]\label{remark:greedy}
    We would like to emphasize that \aLearn\ is always greedy in avoiding $\avs$. 
    This is an important deviation from the $\epsilon$-greedy exploration principle~\cite{sutton-barto_1998}, where at each time step the agent chooses a random action with small probability $\epsilon\in[0,1]$.
    We do not use that mechanism here because otherwise the agent keeps running the risk of encountering aversive signals~\cite{garcia_2015}.
    \qed
\end{remark}

\begin{remark}[Internal task restarts]
    The reason for requesting a task restart at \refline{line:fail} is that sometimes the agent could become stuck in a zone of the state space where there are only blocked states or aversive signals. In that case, if we want the agent to start removing feature-action pairs to prevent future aversive signals, we should first transport the agent to a zone in the state space without blocked states and aversive signals.
    For example, in a robot navigation problem, the robot could learn to avoid pits, but once it enters a pit it can perhaps not reliably escape without the help of an external supervisor.
    \qed
\end{remark}

\begin{remark}[Memory efficiency]
\label{remark:memory}
Algorithm~\ref{alg:global} explicitly stores the allowed feature-action pairs in a set $\mem$. This is an intuitive perspective for the theory developed in this paper.
However, in practice it may sometimes be more efficient to store the opposite information, namely, the removed feature-action pairs. This way all allowed feature-action pairs can still be uniquely recovered.
Using the analogy of a planar map, where aversive signals are borders between neutral zones on the one hand and undesirable zones on the other hand, there could be a decreased memory usage in storing only the border (i.e., the removed feature-action pairs) if the borders are simple shapes instead of irregular shapes with many protrusions.
\qed
\end{remark}

\subsection{Results}

The following theorem helps to understand what \aLearn\ computes.
\begin{theorem}\label{theo:learn}
    For all tasks $\task=\tasktup$,
    for each $\st_0\in\startstates$, for each run of \aLearn, where $\memfix$ denotes the fixpoint,
    \begin{enumerate}%
        \item \label{enu:theo-learn-preserve} if there is a strategy for $\st_0$ then $\prop{\st_0}\memfix\neq\emptyset$.%
            
        \item \label{enu:theo-learn-discover} if $\prop{\st_0}\memfix\neq\emptyset$ then every trial for $\st_0$ after the fixpoint avoids blocked states and $\avs$.
    \end{enumerate}    
\end{theorem}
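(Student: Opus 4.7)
The plan is to prove the two parts with distinct techniques.

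For part (\ref{enu:theo-learn-preserve}), I would fix any strategy $\pol^*$ for $\st_0$ and consider its graph
\[
    G := \{(\f,\act) \in \features \times \actions : \act \in \pol^*(\f)\}.
\]
The heart of the proof is the invariant $G \subseteq \mem$ maintained throughout the entire run. It holds initially because \refline{line:init-mem} sets $\mem := \features\times\actions$, and I would verify by induction on the number of removals that it survives every execution of \refline{line:exclude}. Suppose, toward contradiction, that at some step the current state is $\st$ and the chosen action is $\act$, and the removal strips a pair $(\f,\act) \in G$ with $\f \in \ff(\st)$. Then $\act \in \pol^*(\f) \subseteq \prop{\st}{\pol^*}$, so strategy condition~\ref{enu:strategy-avs} gives $(\st,\act) \notin \avs$, while condition~\ref{enu:strategy-successor} gives $\prop{\st'}{\pol^*} \neq \emptyset$ for every $\st' \in \tr(\st,\act)$. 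The inductive hypothesis $G \subseteq \mem$ at the moment of the check on \refline{line:feedback} yields $\prop{\st'}{\mem} \supseteq \prop{\st'}{\pol^*} \neq \emptyset$, so neither disjunct in \refline{line:feedback} holds --- contradicting that the removal occurred. Passing to the limit, $G \subseteq \memfix$, and strategy condition~\ref{enu:strategy-start} then gives $\prop{\st_0}{\memfix} \supseteq \prop{\st_0}{\pol^*} \neq \emptyset$.

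For part (\ref{enu:theo-learn-discover}), I would argue by contradiction directly from the defining property of a fixpoint. Fix a trial for $\st_0$ that begins after $\memfix$ has been reached; since $\prop{\st_0}{\memfix} \neq \emptyset$, the trial is not immediately aborted at \refline{line:start-fail}. Suppose, for contradiction, that at some step of this trial the current state $\st$ and chosen action $\act \in \prop{\st}{\memfix}$ satisfy $(\st,\act) \in \avs$ or $\prop{\st'}{\memfix} = \emptyset$ for the realized successor $\st'$. Then \refline{line:exclude} would fire; because $\act \in \prop{\st}{\memfix}$, there exists $\f \in \ff(\st)$ with $(\f,\act) \in \memfix$, so removing $\ff(\st)\times\{\act\}$ strictly shrinks $\mem$ below $\memfix$. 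This contradicts that $\memfix$ is the fixpoint of the run. Hence no step in a post-fixpoint trial for $\st_0$ ever encounters an aversive signal or a blocked successor.

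The main subtlety is in part (\ref{enu:theo-learn-preserve}): the blocked-successor test on \refline{line:feedback} is evaluated against the evolving set $\mem$, not against $\pol^*$, so one cannot directly invoke the strategy property to rule out the test firing. The invariant $G \subseteq \mem$ is precisely the bridge that converts the strategy's semantic guarantee $\prop{\st'}{\pol^*} \neq \emptyset$ into the algorithmic guarantee $\prop{\st'}{\mem} \neq \emptyset$ that is actually consulted by \aLearn, and this is what allows the induction to close. Part (\ref{enu:theo-learn-discover}) is then essentially bookkeeping: any triggering of \refline{line:exclude} would certify that $\mem$ had not yet stabilized.
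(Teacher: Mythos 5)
Your proposal is correct and matches the paper's proof in essence: your invariant $G \subseteq \mem$, verified by induction on removals, is the same argument the paper phrases as ``consider the first strategy pair removed'' for Property~\ref{enu:theo-learn-preserve}, and your fixpoint-contradiction for Property~\ref{enu:theo-learn-discover} is exactly the paper's reasoning that any post-fixpoint triggering of \refline{line:exclude} would strictly shrink $\memfix$.
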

\begin{proof}
We consider the two properties separately.

    \paragraph*{Property~\ref{enu:theo-learn-preserve}}
    
        Suppose there is a strategy $\pol$ for $\st_0$. We show that the feature-action pairs of $\pol$ are preserved in $\memfix$, so that $\prop{\st_0}\pol\neq\emptyset$ would imply $\prop{\st_0}\memfix\neq\emptyset$.        
        Towards a contradiction, suppose that \aLearn\ removes a pair $(\f,\act)$ from $\mem$ where $\act\in\pol(\f)$; let $(\f,\act)$ be the first such pair that is removed.
        The removal has happened as follows: we reach a state $\st$ with $\f\in\ff(\st)$ and we perform $\act$, and either the successor state $\st'\in\tr(\st,\act)$ is blocked or we receive an aversive signal. We discuss each case in turn.
        
        Let $\mem$ denote the remaining feature-action pairs just before we remove $(\f,\act)$. 
        Note that $\act\in\pol(\f)$ and $\f\in\ff(\st)$ together imply $\act\in\prop\st\pol$.
        \begin{itemize}
            \item  %
            Suppose that $\st'$ is blocked.
            Since $\pol$ is a strategy, by condition \ref{enu:strategy-successor} of Definition~\ref{def:strategy}, we have assumed $\prop{\st'}\pol\neq\emptyset$. So, there is a feature $\f'\in\ff(\st')$ and an action $\act'\in\pol(\f')$.
            Since $(\f,\act)$ is the first pair of $\pol$ that is removed, we still have $(\f',\act')\in\mem$.
            But then $\act'\in\prop{\st'}\mem$, and $\st'$ is actually not blocked; we have found a contradiction.
            
            \item Suppose that an aversive signal was received when applying $\act$ to $\st$, which implies $(\st,\act)\in\avs$. This immediately contradicts the assumption that $\pol$ satisfies condition~\ref{enu:strategy-avs} of Definition~\ref{def:strategy}.
       \end{itemize}
                
    \paragraph*{Property~\ref{enu:theo-learn-discover}}
    
       Suppose $\prop{\st_0}\memfix\neq\emptyset$.            
        Towards a contradiction, suppose that after the fixpoint there is a trial for start state $\st_0$ where we encounter a state $\st$ and we perform an action $\act$ such that either the successor state is blocked or we receive an aversive signal. Suppose we conceptually halt the offending trial at the first encountered problem. We have followed a path:
        \[
            \st_0
                \jump{\act_0}
            \st_1
                \jump{\act_1}
            \ldots
            \st_{n-1}
                \jump{\act_{n-1}}
            \st_n=\st
                \jump{\act_n=\act}
            \st',
        \]
        for some $\st'\in\tr(\st,\act)$.
        We have $\act_i\in\prop{\st_i}\memfix$ for each $i\in\intrange 0n$.   
        We note in particular that $\act\in\prop\st\memfix$.       
        Next we distinguish two cases, depending on the type of problem.
        \begin{itemize}
            \item Suppose that $\prop{\st'}\memfix=\emptyset$.
            Then \aLearn\ now removes $\ff(\st)\times\set{\act}$ from $\mem$. But then we will no longer propose action $\act$ for state $\st$, which was previously allowed by the fixpoint. Then $\memfix$ would be an invalid fixpoint, which is a contradiction.
        
            \item Suppose that an aversive signal is received when applying $\act$ to $\st$, which implies $(\st,\act)\in\avs$. We make a similar reasoning as in the previous case: \aLearn\ removes $\ff(\st)\times\set{\act}$ from $\mem$. Again $\memfix$ would be an invalid fixpoint.
        \end{itemize}     
\end{proof}

\begin{remark}[Strategies and eventual success]
Suppose that a task has a strategy for each start state. In that case, Theorem~\ref{theo:learn} tells us that every run of \aLearn\ will eventually avoid blocked states and aversive signals. The agent therefore makes a transition from first discovering the strategies to later exploiting the strategies.

The opposite is not necessarily true: there are tasks for which exist runs that eventually avoid blocked states and aversive signals, but without there being a strategy in the sense of Definition~\ref{def:strategy}. This is illustrated by the task in Figure~\ref{fig:aversive-task-long}. Consider a run where the first application of action $b$ in state $1$ results in an aversive signal, and after which we immediately restart the task. In that run, there is no further exploration to state $2$, which causes $(f,a)\in\memfix$; hence, $\prop{1}\memfix=\set{a}\neq\emptyset$.
    However, note that if the internal restart request at \refline{line:fail} of Algorithm~\ref{alg:global} would sometimes not be handled immediately, but a few steps later, then some runs will not preserve the pair $(f,a)$.
\qed
\end{remark}

\begin{figure}[h]
    \begin{center}
    \includegraphics[width=0.4\textwidth]{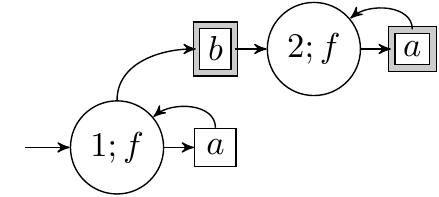}
    \end{center}
    \caption{An example task that has no strategy for the start state $1$. But there is a run of \aLearn\ in which the feature-action pair $(f,a)$ is preserved. The graphical notation is explained in Figure~\ref{fig:task-two-states}.}
    \label{fig:aversive-task-long}
\end{figure}

\begin{remark}[Usage of \aLearn]
    The insights of Theorem~\ref{theo:learn} could be used as follows.
    First, although proving that a strategy exists helps in understanding guarantees on the agent performance, programming the strategy by hand could be tedious and time-consuming. So, Property~\ref{enu:theo-learn-preserve} could be used to materialize strategies once they are proven to exist.
    
    Second, if one does not know whether a strategy exists, Property~\ref{enu:theo-learn-discover} could be used to perform a preliminary search for strategies. Although the discovered strategies might not be easily interpreted, they could serve as inspiration for a theoretical study of strategies for the tasks at hand.     
    A practical consideration, however, is that it might not be possible to efficiently detect the fixpoint, i.e., typically one does not know if a fixpoint has been reached when \aLearn\ has not removed feature-action pairs for a while.
\qed
\end{remark}

\subsection{Fairness}
\label{sub:fairness}

So far we have silently allowed all possible runs of \aLearn. For example, we did not explicitly demand that the agent actually must receive an aversive signal when applying an action $\act$ to a state $\st$ where $(\st,\act)\in\avs$. The aversive signal could also be omitted. This brings us to the topic of fairness~\cite{francez_1986}.

Intuitively, for this paper, fairness would mean that there is sufficient exploration of the task.
A practical application of \aLearn\ (Algorithm~\ref{alg:global}) could take the following fairness assumptions into account:
\begin{itemize}
    \item if we execute \refline{line:init-state} infinitely often then we choose each start state infinitely often;
    
    \item to fully learn the task from each start state, we infinitely often issue external task restarts at \refline{line:desired-restart}; those restarts are not requested by \aLearn\ itself;

    \item at \refline{line:action}, if we encounter the same pair of a state $\st$ and set $\mem$ infinitely often then we choose each action $\act\in\prop\st\mem$ infinitely often;%
            
    \item at \refline{line:succ-state}, if we apply action $\act$ infinitely often to state $\st$ then each successor state $\st'\in\tr(\st,\act)$ is visited infinitely often from an application of $\act$ to $\st$;

    \item at \refline{line:feedback}, if we perform action $\act$ in state $\st$ infinitely often, where $(\st,\act)\in\avs$, then the agent should infinitely often receive an aversive signal when applying $\act$ to $\st$; 
\end{itemize}
The only aspect of fairness that can be directly influenced by the agent itself, is the action selection at \refline{line:action}. For this purpose, a random number generator can be used to select random indices in an array-representation of the proposed actions.\footnote{One has to assume that the random number generator is fair in selecting all indices infinitely often if we let the system run forever.}
    
   
\begin{remark}[Fairness not required]
    Note that Theorem~\ref{theo:learn} also works for unfair runs.
    Every run has a fixpoint on $\mem$, whether the run is fair or not. But by exploring fewer states, or by issuing fewer aversive signals, an unfair run essentially makes it easier for the agent to avoid aversive signals.
    This way, some feature-action pairs could remain forever, even though a more fair exploration of the task could have removed them.
    
    Also, because the notion of strategy in Definition~\ref{def:strategy} is rather strong, it is not possible for a fair run or an unfair run to confront the agent with a situation that leads to the failure of a strategy. The agent will never be disappointed in the exploitation of the strategy.
    \qed
\end{remark}

\section{Simple Grid Navigation}\label{sec:grid}

We study a simple class of grid navigation problems.

\subsection{Definitions}\label{sub:grid-defs}

Let $\ints$ denote the set of integers. For any two points $p_1,p_2\in\ints\times\ints$, denoting $p_1=(x_1,y_1)$ and $p_2=(x_2,y_2)$, we recall the definition of $L_1$-distance between $p_1$ and $p_2$:
\[
    \mandist(p_1,p_2) = \abs{x_2 - x_1} + \abs{y_2 - y_1}.
\]

A \emph{simple grid navigation problem} is a quintuple $\grid=(\gridwidth,\gridheight,\gridstarts,\allowbreak\gridtargets,\allowbreak\gridtime)$, where
\begin{itemize}
    \item $\gridwidth\in\nat$ and $\gridheight\in\nat$ are the dimensions of a terrain; 
    \item $\gridstarts\subseteq\intrange 0\gridwidth\times\intrange 0\gridheight$ is a set of start locations;
    \item $\gridtargets\subseteq\intrange 0\gridwidth\times\intrange 0\gridheight$ is a set of possible target locations; and,
    \item $\gridtime\in\nat$ is a time limit,
\end{itemize}
with the following assumptions,
\begin{itemize}    
    \item $\forall p,q\in\gridtargets$, we assume $\mandist(p,q)< \gridtime$; and,
    \item $\forall p\in\gridstarts, \forall q\in\gridtargets$, we assume $\mandist(p,q)< \gridtime$.
\end{itemize}

The intuition is that at the beginning of a session we select a start location $p\in\gridstarts$ and an initial active target location $q\in\gridtargets$ and we should navigate from $p$ to $q$ within time $\gridtime$. Whenever we reach the active target location $q$ we choose another target location $q'\in\gridtargets$ and we should now navigate from $q$ to $q'$ within time $\gridtime$. This relocation of the active target may be repeated an arbitrary number of times. But at any moment we may also begin a new session, in which we again choose a start location and initial target location. There are infinitely many sessions.
The available actions are: left, right, up, down, left-up, left-down, right-up, right-down, and wait.
Importantly: failure to respect the time $\gridtime$ results in an aversive signal; we aim to eventually avoid such aversive signals.

For a location $(x,y)\in\nat\times\nat$ and an action $a$, we now define the possible successor locations that result from the application of $a$ to $(x,y)$; we denote this set as $\gridmove(x,y,a)$. A set of multiple possible successors is used to represent non-determinism. An empty set of of successors is used to say that the action would lead outside the considered terrain.
We assume the following actions to be deterministic: left, right, up, and down. The other, ``diagonal'', actions are non-deterministic. 
For example, for each $(x,y)\in\ints\times\ints$,
\[
    \gridmove(x,y,\text{left}) = \set{(x-1,y)},
\]
\[
    \gridmove(x,y,\text{left-up}) = \set{(x-1,y-1), (x-1,y), (x,y-1)}, 
\]and,
\[
    \gridmove(x,y,\text{wait}) = \set{(x,y)}.
\]
We make the assumption that the direction of the positive Y-axis corresponds to ``downward''.

We now define the task structure $\taskof\grid=\tasktup$ that corresponds to the above grid problem $\grid$. Here it will be convenient to view states and features as structured objects, with components; for an object $x$ with a component $y$, we write $x\g y$ to access the component.

\begin{itemize}
    \item the set $\states$ consists of all triples $\st$ with components \emph{agent}, \emph{target}, and \emph{time}, satisfying the following constraints: 
    $\st\g{agent}$ and $\st\g{target}$ are both in the set $\intrange 0\gridwidth\times\intrange 0\gridheight$, and 
    $\st\g{time}\in\intrange 0\gridtime$;
    
    \item the set $\startstates$ consists of those states $\st$ where $\st\g{agent}\in\gridstarts$, $\st\g{target}\in\gridtargets$, and $\st\g{time}=\gridtime$;    
    
    \item $\actions=\set{\text{left, right, up, down, left-up, left-down, right-up, right-down, wait}}$;
    
    \item the set $\features$ consists of all pairs $\f$ with components \emph{offset} and \emph{time}, satisfying the constraints: $\f\g{offset}\in\intrange{-\gridwidth}\gridwidth\times\intrange{-\gridheight}\gridheight$ and $\f\g{time}\in\intrange 0\gridtime$;%
        \footnote{These features represent an agent-centric perspective, in which the relative offset to the target is stored (see below).}
    
    \item the transition function $\tr$ is described by Algorithm~\ref{alg:grid-trans}; for a state $\st\in\states$ and action $\act\in\actions$, the set $\tr(\st,\act)$ consists of all states that could possibly be returned by Algorithm~\ref{alg:grid-trans} upon receiving input $(\st,\act)$;

    \item regarding $\ff$, for each $\st\in\states$, we define $\ff(\st)=\set{\f}$ where $\f\in\features$ is the single feature for which $\f\g{offset}=\st\g{target}-\st\g{agent}$ and $\f\g{time}=\st\g{time}$;
    and,
    
    \item $\avs=\set{(\st,\act)\in\states\times\actions\mid \st\g{time}=0}$.
\end{itemize}


\begin{algorithm}
\caption{Action application for grid navigation (Section~\ref{sec:grid})}\label{alg:grid-trans}

\SetKwInOut{Input}{input}
\SetKwInOut{Output}{output}
\LinesNumbered{}

\BlankLine{}
\Input{(1) current state $\st\in\states$\\ 
       (2) action $\act\in\actions$}
\BlankLine{}
\Output{successor state $\st'\in\states$}

\BlankLine{}
$(x_1,y_1)$ := $\st\g{agent}$\;
$(x_2,y_2)$ := choose from $\gridmove(x_1,y_1,a)$\;

\BlankLine{}
\uIf{$(x_2,y_2)\in\intrange 0\gridwidth \times \intrange 0\gridheight$}{
    $\st'\g{agent}$ := $(x_2, y_2)$\;
}
\Else{
    $\st'\g{agent}$ := $(x_1,y_1)$\;
}

\BlankLine{}
\uIf{$\st'\g{agent} = \st\g{target}$}{
    \label{line:grid-reach-target}
    $\st'\g{target}$ := choose from $\gridtargets$\;
    $\st'\g{time}$ := $\gridtime$\;    
}
\Else{
    $\st'\g{target}$ := $\st\g{target}$\;
    $\st'\g{time}$ := $\max(0,\st\g{time}-1)$\;
}

\end{algorithm}

\subsection{Results}

\begin{proposition}\label{result:grid}
For each grid problem $\grid$, there is a strategy for each start state in $\taskof\grid$.
\end{proposition}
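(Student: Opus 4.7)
The plan is to build a single policy $\pol$ that is a strategy for every start state at once. A feature encodes only the offset $\f\g{offset}=(dx,dy)$ from agent to target together with the remaining time $\f\g{time}=t$, so I define $\pol(\f)$ by cases: if $0<|dx|+|dy|<t$, pick the unique cardinal action that strictly decreases $|dx|+|dy|$ (right when $dx>0$, left when $dx<0$, down when $dx=0<dy$, up when $dx=0>dy$); if $(dx,dy)=(0,0)$ and $t>0$, let $\pol(\f)=\set{\text{wait}}$; otherwise let $\pol(\f)=\emptyset$. Restricting to cardinal actions and wait sidesteps the nondeterminism of the diagonal moves, and the guard $|dx|+|dy|<t$ is the invariant to be maintained throughout.

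Condition~\ref{enu:strategy-start} of Definition~\ref{def:strategy} is immediate from the assumption $\mandist(p,q)<\gridtime$ for $p\in\gridstarts$, $q\in\gridtargets$: a start state $\st_0$ has $t=\gridtime$ and $|dx|+|dy|=\mandist(\st_0\g{agent},\st_0\g{target})<\gridtime$, so $\prop{\st_0}\pol\neq\emptyset$. Condition~\ref{enu:strategy-avs} is equally direct, because $\pol$ only proposes actions when $t>0$, while $\avs$ consists precisely of the pairs with $\st\g{time}=0$.

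The bulk of the work is Condition~\ref{enu:strategy-successor}: for every state $\st$ and $\act\in\prop\st\pol$, every successor $\st'\in\tr(\st,\act)$ must satisfy $\prop{\st'}\pol\neq\emptyset$, which reduces to showing that the invariant $|dx|+|dy|<t$ survives each transition. By inspection of Algorithm~\ref{alg:grid-trans} there are two sub-cases. If the chosen action lands the agent on the current target (possible when the offset has magnitude $1$ and the move points toward it, or when we wait at offset $0$), the algorithm picks a fresh target from $\gridtargets$ and resets the time to $\gridtime$; the other problem assumption $\mandist(p,q)<\gridtime$ for $p,q\in\gridtargets$ then restores the invariant. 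Otherwise the agent moves one step strictly closer to the target and $t$ drops by one, so the strict inequality persists. The subtlest detail---and the most likely place to slip---is verifying that the boundary clamping in Algorithm~\ref{alg:grid-trans} never triggers: if $dx>0$, the target's $x$-coordinate both lies in $\intrange 0\gridwidth$ and strictly exceeds the agent's, so the right step stays in the grid, and symmetric reasoning handles the other three cardinal actions.
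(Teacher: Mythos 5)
Your proposal is correct and follows essentially the same route as the paper's proof: the identical policy (the unique cardinal action toward the target, or wait at offset zero, guarded by the invariant $\mannorm{\f\g{offset}}<\f\g{time}$), with the start and aversive-signal conditions read off directly and the successor condition proved by the same case split on whether the target is reached (time reset plus the target-to-target distance assumption) versus a deterministic step that decreases the $L_1$-distance while the clock drops by one. Your explicit check that boundary clamping cannot trigger is just a more elementary phrasing of the paper's convexity remark, so nothing essentially new or missing.
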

\begin{proof}
    Denote $\taskof\grid=\tasktup$. We define one policy that is a strategy for all start states.   
    
    First, we define an auxiliary set $\view\subseteq\features$ to consist of all features $\f$ for which
    \[
        \mannorm{\f\g{offset}}<\f\g{time},
    \]
    where $\mannorm{(x,y)}=\abs x+\abs y$ is the $L_1$-norm of a point $(x,y)$.
    Intuitively, such features indicate that the deterministic distance from the agent location to the target location -- where we only use the actions left, right, up, and down -- can be bridged within the remaining time.
    
    We now define a policy $\pol$.
    For all $\f\in\features\setminus\view$ we define $\pol(\f)=\emptyset$, and 
    for each $\f\in\view$, denoting $\f\g{offset}=(x,y)$, we define
    \[
        \pol(\f) = 
            \begin{cases}
            \set{\text{left}} & \text{if }x < 0;\\
            \set{\text{right}} & \text{if }x > 0;\\
            \set{\text{up}} & \text{if $x=0$ and $y < 0$};\\
            \set{\text{down}} & \text{if $x = 0$ and $y > 0$};\\
            \set{\text{wait}} & \text{$x=0$ and $y=0$}.
            \end{cases} 
    \]
    As mentioned earlier, we define downwards as the direction of the positive Y-axis.
    The case where $\pol(\f)=\set{\text{wait}}$ occurs when the agent is located at the target.%
        \footnote{Algorithm~\ref{alg:grid-trans} implies that the situation where $\f\g{offset}=(0,0)$ only occurs when the agent reaches some target location and the next target location is the same as the old target location.}
   
    Let $\st_0\in\startstates$. We show that $\pol$ is a strategy for $\st_0$, according to Definition~\ref{def:strategy}.
    
    \paragraph*{Condition~\ref{enu:strategy-start} of Definition~\ref{def:strategy}}
    We show that $\prop{\st_0}\pol\neq\emptyset$.
    By assumption on $\st_0$, we have $\st_0\g{agent}\in\gridstarts$, $\st_0\g{target}\in\gridtargets$, and $\st_0\g{time}=\gridtime$.
    By using the distance assumptions on locations in $\grid$, we obtain $\mandist(\st_0\g{agent},\st_0\g{target})<\gridtime$. 
    Letting $\f$ be the single feature in $\ff(\st_0)$, we see that $\mannorm{\f\g{offset}}<\gridtime=\f\g{time}$, which implies that $\f\in\view$.
    Hence $\pol(\f)\neq\emptyset$, which implies $\prop{\st_0}\pol\neq\emptyset$.
        
    \paragraph*{Condition~\ref{enu:strategy-successor} of Definition~\ref{def:strategy}}
    
    Let $\st\in\states$. Suppose there is some action $\act\in\prop\st\pol$. Let $\f$ denote the single feature of $\st$. We have $\act\in\pol(\f)$, which implies $\f\in\view$.
    
    Let $\st'\in\tr(\st,\act)$. We must show that $\prop{\st'}\pol\neq\emptyset$.
    Let $\f'$ be the single feature of $\st'$. We will show that $\f'\in\view$, which implies $\pol(\f')\neq\emptyset$, and further that $\prop{\st'}\pol\neq\emptyset$.
    Based on Algorithm~\ref{alg:grid-trans}, we reason about what has happened during the application of action $\act$ to state $\st$.
    \begin{itemize}
        \item Suppose the if-test at \refline{line:grid-reach-target} succeeds, i.e., the agent reaches the target location. Then $\st'\g{time}=\gridtime$, and 
        \begin{align*}
            \mandist(\st'\g{agent},\st'\g{target}) 
                &= \mandist(\st\g{target},\st'\g{target})\\
                &< \gridtime,    
        \end{align*}
        where we use the distance assumption between target locations.
        Overall, $\mannorm{\f'\g{offset}}<\f'\g{time}$; hence, $\f'\in\view$.
        
        \item Suppose the if-test at \refline{line:grid-reach-target} does not succeed, i.e., the agent did not yet reach the target location.
        It must be that $\act\neq\text{wait}$, because otherwise $\f\g{offset}=(0,0)$, which implies $\st'\g{agent}=\st\g{agent}=\st\g{target}$, and the test at \refline{line:grid-reach-target} would have succeeded (see previous case).
        So, $\act\in\set{\text{left, right, up, down}}$. 
        
        First, we observe that 
        \[
            \mandist(\st'\g{agent},\st'\g{target})<\mandist(\st\g{agent},\st\g{target}).
        \]
        Indeed, this property holds because (1) the locations $\st\g{agent}$ and $\st\g{target}=\st'\g{target}$ are inside the convex terrain; (2) the action $\act$ is given deterministic movement semantics (i.e., there is precisely one outcome), causing $\st'\g{agent}$ to be both inside the terrain and strictly closer to $\st\g{target}=\st'\g{target}$.
        
        Second, we also observe that
        \[
            \f'\g{time}=\f\g{time}-1,
        \]
        since $\st'\g{time}=\max(0,\st\g{time}-1)$ by definition and $\st\g{time}>0$ (which follows from $\f\in\view$).
        
        Overall, we may now write
        \begin{align*}
            \mannorm{\f'\g{offset}} 
                &< \mannorm{\f\g{offset}} \\
                &\leq \f\g{time} - 1\\
                &= \f'\g{time}.
        \end{align*}           
        In the second line we have used $\f\in\view$. We conclude that $\f'\in\view$.
    \end{itemize}
        
    \paragraph*{Condition~\ref{enu:strategy-avs} of Definition~\ref{def:strategy}}
    
    Let $\st\in\states$. Suppose there is some action $\act\in\prop\st\pol$, which implies that the single feature $\f$ of $\st$ must be in $\view$. 
    By definition of $\view$, we have $\mannorm{\f\g{offset}}<\f\g{time}$.
    Hence, $\st\g{time}>0$, which implies $(\st,\act)\notin\avs$, as desired.    
    \qedhere
\end{proof}

\begin{remark}[Richness in strategy]
    The policy defined in the proof of Proposition~\ref{result:grid} is in general not the maximal strategy, in the sense that the policy could be extended with more actions than currently specified. For instance, if the time limit is high then the agent can randomly wander around before it becomes sufficiently urgent to reach a target location.     
    The agent may also use the diagonal actions, like left-up, if the time limit is not violated under either of the three outcomes. 
    \qed
\end{remark}

\begin{remark}[Extendability]
    It is possible to extend the above setting of grid navigation to richer state representations, by including for example the locations of additional objects (that do not influence the agent). If this new information would be communicated to the agent with a set of features that is disjoint from the set of old features in Section~\ref{sub:grid-defs}, then Proposition~\ref{result:features} tells us that the strategy described in the proof of Proposition~\ref{result:grid} is still valid.     
    \qed
\end{remark}

\section{Conclusion and Further Work}\label{sec:conclusion}

We have used the notion of strategies to reason about the successful avoidance of aversive signals in tasks. We have shown that our avoidance learning algorithm always preserves those strategies.
Now we discuss some interesting topics for further work.

\paragraph*{Feature detectors}
In this paper we have considered a framework in which features are essentially black boxes, in the sense that we do not assume anything about the way that they are computed. Hence, we do not know how features are related to the task environment. 
It would be interesting to develop more detailed insights into how features can be designed, to ensure that strategies, or similarly successful policies, are possible.

In particular, it seems fascinating to explore possible connections between our framework and neuron-like models, where features would be represented by neurons or by small groups of neurons. It is currently an open question whether or not feature learning in the brain is a completely unsupervised process~\cite{fremaux_2015}, i.e., it is not known whether feature creation is influenced by rewarding or aversive signals. 
So, in a general theory, it might be valid to consider feature learning as a separate, unsupervised, module. This approach could lead to a conceptually simple framework of agent behavior and feature detection simultaneously.
Concretely, the approach could enable the results in this paper to be linked to various feature detector algorithms.

\paragraph*{The challenge of new features}
In this paper we have assumed that the set of features is fixed at the beginning of the learning process. This could be suitable for many applications, as there is no fixed limit on how many features there are, as long as there are finitely many.
But it seems intriguing to introduce new features while the agent is performing the task. In the technical approach of this paper, however, a newly inserted feature likely proposes wrong actions if we would initially associate all actions  to the feature. 
In general we still insist that aversive signals are avoided, and therefore the wrong actions need to be unlearned as soon as possible.

A way to soften the introduction of new features, could be to reintroduce reward into the framework. Concretely, a feature $\f$ may only propose an action $\act$ if the pair $(\f,\act)$ has been observed to be correlated to reward, either directly, or transitively by means of eligibility traces~\cite{sutton-barto_1998}. This idea introduces a threshold for proposing actions. Of course any feature-action pairs introduced in this way could still lead to aversive signals. For example, there could be spurious features (e.g. features that randomly appear) to which no actions should be linked, or perhaps the rewarding signals contradict the aversive signals, or some actions that give reward could also give aversive signals (as in the example of the Introduction). To resolve priority issues, one could view avoidance learning as having the highest precedence, where reward is used as a softer ranking mechanism on the allowed actions.

Possibly, an agent that keeps learning new features will keep making mistakes. How to cope with new features therefore seems a relevant question. The answers could perhaps also help to understand animal behavior and consciousness. Thereto one could consider other notions of success than the avoidance of aversive signals investigated in this paper.

\bibliographystyle{apalike} 


\end{document}